
\documentclass{article}

\usepackage{arxiv}

\usepackage{rotating} 
\usepackage{longtable} 
\usepackage{wrapfig} 
\usepackage{booktabs} 
\usepackage{float} 
\usepackage{natbib} 
\usepackage{gensymb} 
\usepackage{graphicx} 
\usepackage[linesnumbered,ruled,vlined]{algorithm2e} 
\usepackage[load-configurations=version-1]{siunitx} 
\usepackage{amsmath,amssymb,amsthm} 
\usepackage{hyperref} 

\newtheorem{definition}{Definition}
\newtheorem{lemma}{Lemma}
\newtheorem{proposition}{Proposition}
\newtheorem{theorem}{Theorem}

\begin{document}

\title{CantorNet: A Sandbox for Testing Geometrical and Topological Complexity Measures}

 \author{
Michal Lewandowski \\
Software Competence Center Hagenberg (SCCH) \\
\texttt{michal.lewandowski@scch.at} \\
\And
Hamid Eghbalzadeh \\
AI at Meta \\
\texttt{heghbalz@meta.com} \\
\And
Bernhard A. Moser \\
Software Competence Center Hagenberg (SCCH), \\
Johannes Kepler University of Linz (JKU) \\
\texttt{bernhard.moser@scch.at} \\
}

\maketitle

\begin{abstract}
Many natural phenomena are characterized by self-similarity, for example the symmetry of human faces, or a repetitive motif of a song. Studying of such symmetries will allow us to gain deeper insights into the underlying mechanisms of complex systems. Recognizing the importance of understanding these patterns, we propose a geometrically inspired framework to study such phenomena in artificial neural networks. To this end, we introduce \emph{CantorNet}, inspired by the triadic construction of the Cantor set, which was introduced by Georg Cantor in the $19^\text{th}$ century. In mathematics, the Cantor set is a set of points lying on a single line that is self-similar and  has a counter intuitive property of being an uncountably infinite null set. Similarly, we introduce CantorNet as a sandbox for studying self-similarity by means of novel topological and geometrical complexity measures. CantorNet constitutes a family of ReLU neural networks that spans the whole spectrum of possible Kolmogorov complexities, including the two opposite descriptions  (linear and exponential as measured by the description length). CantorNet's decision boundaries can be arbitrarily ragged, yet are analytically known. Besides serving as a testing ground for complexity measures, our work may serve to illustrate potential pitfalls in geometry-ignorant data augmentation techniques and adversarial attacks.\footnote{Accepted at NeurIPS Workshop Symmetry and Geometry  in Neural Representations 2024.}
\end{abstract}
\textbf{\textit{Keywords}}: topological and geometrical measures, ReLU neural networks, synthetic examples

\section{Introduction}
\label{sec:intro}

Neural networks perform extremely well in various domains, for example computer vision~\citep{Krizhevsky2012ImageNetCompetition} or speech recognition~\citep{Maas13rectifiernonlinearities}. 
Yet, this performance is not sufficiently understood from the mathematical perspective, and current advancements are not backed up by a formal mathematical analysis. 
We start by identifying  a lack of \emph{tractable examples} that allow us to study neural networks through the lens of  self-similarity of  objects they describe. This difficulty arises from the inherent statistical nature of these networks and the significant computational effort required to accurately determine the underlying geometry of the decision manifold.  We note that the use of constructed examples helps to illustrate certain characteristic effects, such as the
concentration of measure effects in high dimensions to explain the vulnerability against adversarial examples~\citep{gilmer2018adversarial}. Such examples are typically designed to underscore either the capabilities or limitations of neural architectures in exhibiting certain phenomena.
However, there exists a risk of oversimplification that might lead to an underappreciation of the complexities and challenges of handling real-world, high-dimensional, and noisy data. Despite these limitations, toy examples are valuable as they can be constructed to emphasise some properties which remain elusive at a larger scale. 
Further examples include  the XOR~\citep{minsky1969perceptrons}  or CartPole problem~\citep{Sutton-Barto-2018} which, despite their simplicity, have provided a controllable evaluation framework in their respective fields. 
Our analysis is further motivated by the fact that many natural phenomena feature some self-similarity as understood by symmetry, e.g., images~\citep{Wang2020}, audio tracks~\citep{Foote1999} or videos~\citep{AlemanFlores2004}.

Our work is closely related to the concepts such as Cantor set, fractals and the Kolmogorov complexity. In the following, we provide a brief background about these concepts.\\
\textbf{Cantor set.} Cantor set, introduced in mathematics  by Georg Cantor~\citep{cantor1879ueber}, is a set of points lying on a single line segment that has a number of counter intuitive properties such as being self-similar, and uncountably infinite, yet of Lebesgue measure 0.
It is obtained by starting with a line segment, partitioned into three equal sub-segments and recursively deleting the middle one, repeating the process an infinite number of times. It is used to illustrate how complex structures can arise from simple recursion rules~\citep{mandelbrot1983fractal}.
CantorNet is inspired by the  construction procedure of the Cantor set, inhering its fractal properties.\\ 
\textbf{Fractals.} In nature, some   complex shapes can be described in a  compact way. Fractals are a good example as they are self-similar geometric shapes whose intricate structure can be compactly encoded by a recursive formula (e.g.,~\cite{Koch1904kochcurve}). A number of measures have been developed to quantify the complexity of the fractals, e.g.,~\cite{mandelbrot1983fractal,Mandelbrot1995,ZMESKAL2013135}. 
On one hand, fractals are self-repetitive structures, what results in a compact description. On the other hand, some fractals can be represented as unions of polyhedral bodies, a less compact description. The construction of the CantorNet is inspired by the triadic representation of the Cantor set, and its opposite representations (complexity-wise) are based on recursed generating function and  union of polyhedral bodies.\\
\textbf{The Kolmogorov Complexity.} The Kolmogorov complexity~\citep{kolmogorov1965} quantifies the information conveyed by one object about another and can be applied to models by evaluating the shortest program that can reproduce a given output, as proposed as early as \cite{solomonoff1964}. In the context of neural networks, \cite{schmidhuber1997discovering} argues that prioritizing solutions with low Kolmogorov complexity enhances generalization. Although computing the exact Kolmogorov complexity of real-world architectures is unfeasible, approximations to the minimal description length~\citep{Grunwald2005} can be made by analyzing the number of layers and neurons, under the condition that the neural networks represent \emph{exactly} the same decision boundaries, which is also the case in our CantorNet analysis.

In summary, in this work we propose \emph{CantorNet}, an arbitrarily ragged decision surface, a natural candidate for testing various geometrical and topological measures. Furthermore, we study the Kolmogorov complexity of its two equivalent  constructions with ReLU  nets.
The rest of the paper is organized as follows. Section~\ref{sec:preliminaries} recalls some basic facts and fixes notation for the rest of the paper, and in Section~\ref{sec:CantorNet}, we describe different CantorNet constructions and representations. Finally, in Section~\ref{s:Conclusions}, we provide concluding remarks and possible future directions for our work.\footnote{Code available at~\url{https://github.com/michalmariuszlewandowski/CantorNet/}}

\section{Preliminaries}\label{sec:preliminaries}

We define a \emph{ReLU neural network} $\mathcal{N}:\mathcal{X}\rightarrow \mathcal{Y}$ with the total number of $N$ neurons as an alternating composition of   the ReLU function   $\sigma(x) := \max(x, 0)$   applied element-wise on the input $x$, and  affine functions with weights $W_k$ and biases $b_k$ at layer $k$. 
An input $x\in\mathcal{X}$ propagated through $\mathcal{N}$  generates non-negative activation values on each neuron.
A \textit{binarization} is  a mapping $\pi:\mathbb{R}^N \to \{0,1\}^N$ applied to a vector (here a concatenation of all hidden layer) $v=(v_1,\ldots,v_N)\in\mathbb{R}^N$ resulting in a binary vector $\{0,1\}^N$ by clipping strictly positive entries of $v$  to 1, and non-positive entries to 0, that is $\pi(v_i)=1$ if $v_i>0$, and $\pi(v_i)=0$ otherwise. An \emph{activation pattern} is the concatenation of all neurons after their binarization for a given input $\mathbf{x}$,  and represents an element in a binary hypercube $\mathcal{H}_N:=\{0,1\}^N$ where the dimensionality is equal to the number of hidden neurons in network $\mathcal{N}$. A \emph{linear region} is an element of a disjoint collection of subsets covering the input domain where the network behaves as an affine function~\citep{montufar2014number}. There is an one-to-one correspondence between an activation pattern and a linear region~\citep{shepeleva2020relu}.

\section{CantorNet}
\label{sec:CantorNet}

In this section, we define CantorNet as a ReLU neural network through repeating application of  weight matrices, similar to the fractal constructions. 
We then introduce an equivalent description through unionizing polyhedral bodies, which is less concise. 
We start the construction with two reshaped ReLU functions (Fig.~\ref{fig:cantor_set_cantornet}, left), and  modify them  to obtain a connected decision manifold with Betti numbers $b_i=0$ for $i\in\{0,1,2\}$, used to  characterizes the topological complexity, providing measures of connectivity, loops, and voids within the decision boundaries~\citep{BianchiniS14}. 
We consider  the function 
\begin{equation}
\label{eq:AA}
A:[0,1]\to[0,1] : x\mapsto \max\{-3x+1,0, 3x-2\},
\end{equation}
as the {\it generating function} and recursively nest it as 
\begin{equation}
\label{eq:nestedA}
A^{(k+1)}(x) := A(A^{(k)}(x)),\, A^{(1)}(x):= A(x).
\end{equation}
Based on the generating function, we can define the decision manifold  $R_k$ as:
\begin{equation}
\label{eq:Rk} 
R_k:= \{(x,y)\in [0,1]^2 : y \leq (A^{(k)}(x)+1)/2\}.
\end{equation}
For a better understanding of the decision manifolds, we have visualized  $R_1$, $R_2$, and $R_3$ in Fig.~\ref{fig:cantor_set_cantornet}.
The \textit{CantorNet} is given by the nested function defined by Eq.~\eqref{eq:nestedA}, which can be mapped to a ReLU neural network representation.
We define the CantorNet family as follows.

\begin{definition}
A ReLU net $\mathcal{N}$ belongs to the CantorNet family if
$\mathcal{N}^{(-1)}(0)= R_k$.   
\end{definition}

We further name the regions ``below'' and  ``above'' $R_k$ as the inset and the outset of CantorNet, respectively, as formalized in the Def.~\ref{def:classes_cantornet}.

\begin{definition}
\label{def:classes_cantornet}
We say that the manifold  $R_k$ given by Eq.~\eqref{eq:Rk} represents the inset of CantorNet, while its complement on the unit square represents the outset  (grey and white areas in Fig.~\ref{fig:TessellationModelA}, respectively).
\end{definition}

\subsection{Recursion-Based Construction}
\label{ss:ModelA}
Note that the  decision surface of $R_k$ (Eq.~\eqref{eq:Rk}) equals to the 0-preimage of a 
ReLU net $\mathcal{N}_A^{(k)}: [0,1]^2 \rightarrow \mathbb{R}$  with weights and biases defined as
\begin{equation}
\label{eq:ReprA}
W_{1}=
\begin{pmatrix}
-3 & 0\\
3 & 0\\
0 & 1
\end{pmatrix},
b_{1}=\begin{pmatrix}
1\\
-2\\
0\\
\end{pmatrix},
W_{2}=
\begin{pmatrix}
    1 & 1 & 0\\ 
0 & 0 & 1
\end{pmatrix}
\end{equation}
and 
the final layer  
$W_{L}=
\begin{pmatrix}
    -\frac 12 & 1
\end{pmatrix},
b_{L}=
\begin{pmatrix}
    - \frac 12
\end{pmatrix}.
$
For recursion depth $k$, we define 
 $\mathcal{N}_A^{(k)}$ as
\begin{equation}
\label{eq:A}
 \mathcal{N}_A^{(k)}(\mathbf{x}):=W_L\circ\sigma \circ g^{(k)}(\mathbf{x}) + b_L,
\end{equation}
where 
$
g^{(k+1)}(\mathbf{x}) := g^{(1)}(g^{(k)}(\mathbf{x})), \sigma$ is the ReLU function, and 
\begin{equation}
\label{eq:g1}
g^{(1)}(\mathbf{x}):= \sigma \circ W_2\circ \sigma\circ (W_1 \mathbf{x}^T+b_1).
\end{equation}
We use $\circ$ to denote the standard composition of functions. 
Fig.~\ref{fig:TessellationModelA} shows the linear regions resulting from the construction described in the Eq.~\eqref{eq:A} for recursion level $k=1$, as well as the linear regions with the corresponding activation patterns  from non-redundant neurons (we skip neurons which do not change their state).

\begin{figure}[!t]
\centering
\includegraphics[width=.99\textwidth]{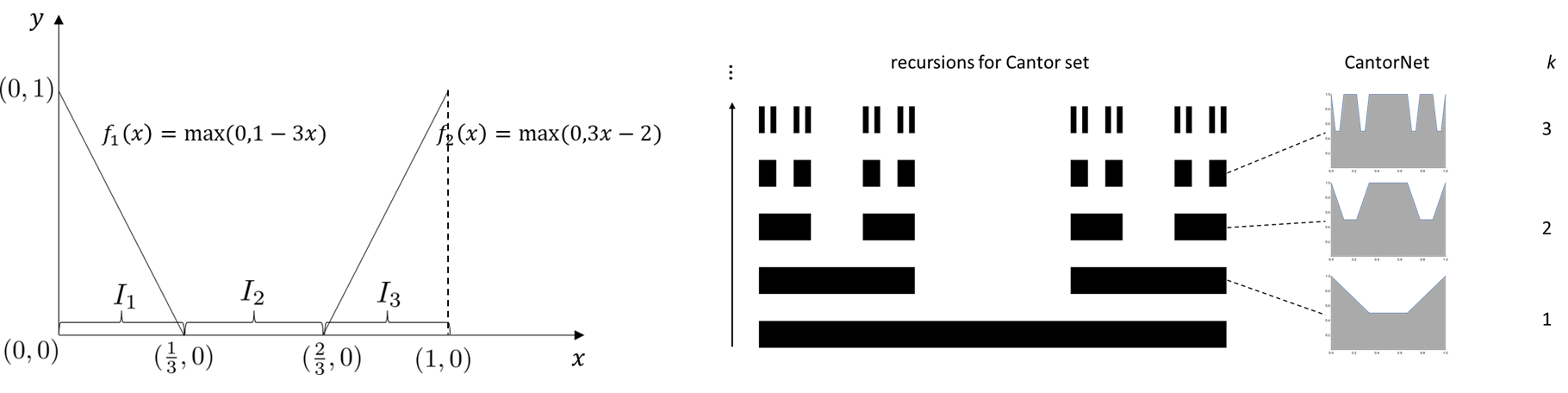}
\caption{Left: The first iteration of the 1-1 correspondence between the ReLU net $\widetilde{\mathcal{N}}_A$, induced by the generating function $A$, and the triadic number expansion shows the intervals $I_1, I_2, I_3$ correspond to the digits $\{0, 1, 2\}$, respectively. Right: CantorNet is inspired by the construction of the Cantor set~\citep{cantor1879ueber}.  
}
\label{fig:cantor_set_cantornet}
\end{figure}

\begin{figure*}[ht]
\centering \includegraphics[width=0.32\textwidth]{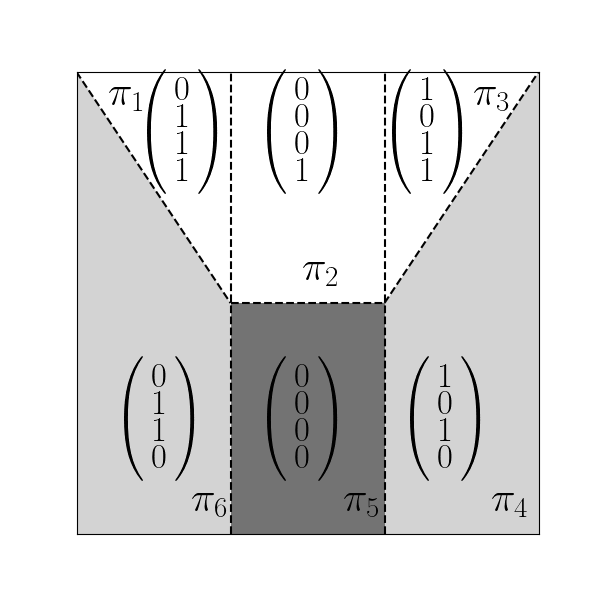}
\caption{Activation patterns $\pi_i$ induced by Eq.~\eqref{eq:A}. We skip  neurons with unchanged values.}
\label{fig:TessellationModelA}
\end{figure*}

\subsection{Triadic Expansion}
\label{sec:triadic_expansion}
In this section, we show that there exists an isomorphism between the triadic expansion, as described in Appendix~\ref{app:algorithm} in Alg.~\ref{alg:algebraic_triadic_expansion}, and the activation pattern $\pi_{\mathcal{N}_A}$ under $\mathcal{N}^{(k)}_A$. In the Triadic Expansion, we partition the interval $[0,1]$ into three intervals, $I_1=[0, \frac 13], I_2=(\frac13, \frac23), I_3=[\frac23, 1]$ (see Fig.~\ref{fig:cantor_set_cantornet}, left). Any  $x\in I_1\cup I_3$ can be described in a triadic system with an arbitrary precision $l$ as $x=\sum_{i=1}^l\frac{a_i}{3^i}$, where $a_i\in\{0,2\}$. Recall that the tessellation of the recursion-based model (Fig.~\ref{fig:TessellationModelA}) is obtained by  partitioning the rectangular domain $(I_1\cup I_3)\times[0,1]$ into increasingly fine rectangles through recursive applications of  $\mathbf{x}\mapsto g^{(k)}(\mathbf{x})$.
We  identify created linear regions by their activation patterns $\pi_{\mathcal{N}_A}$. Equivalently, we can  represent any  $x\in I_1\cup I_3$ using Alg.~\ref{alg:algebraic_triadic_expansion}, obtaining activation patterns as a sequence of ``0''s, and ``1''s. Each of these descriptions is unique, therefore there  exists an isomorphic relationship between the encoding  described in Alg.~\ref{alg:algebraic_triadic_expansion}, and the recursion-based description. 

\begin{lemma}[Computational Complexity of  Activation Patterns of $\mathcal{N}^{(k)}_A$]
\label{lemma:complexity_}
    Given an input $\mathbf{x}=(x_1,x_2)\in[0,1]^2$ and the recursion level $k$, its corresponding activation pattern $\pi(\mathbf{x})$ under the recursion-based representation $\mathcal{N}^{(k)}_A$ can 
    be computed in $O(k)$ operations. 
\end{lemma}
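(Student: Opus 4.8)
The plan is to establish the $O(k)$ bound by tracing the computation of $\pi(\mathbf{x})$ through the recursive structure of $\mathcal{N}_A^{(k)}$ and arguing that each recursion level contributes only a constant amount of work. The key observation is that $\mathcal{N}_A^{(k)}$ is built by composing the fixed, finite-size building block $g^{(1)}$ with itself $k$ times, followed by a single application of the final layer $W_L, b_L$. Since $g^{(1)}$ as defined in Eq.~\eqref{eq:g1} is a composition of the fixed affine maps $W_1, b_1, W_2$ with two applications of the element-wise ReLU $\sigma$, evaluating $g^{(1)}$ on any input and recording the binarization $\pi$ of its neurons costs a constant number of arithmetic operations, independent of $k$. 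The number of neurons per recursion block is fixed (determined by the dimensions of $W_1$ and $W_2$ in Eq.~\eqref{eq:ReprA}), so the width does not grow with depth.

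First I would make precise what ``computing the activation pattern'' entails: it means evaluating $\mathbf{x} \mapsto g^{(1)}(\mathbf{x}) \mapsto g^{(2)}(\mathbf{x}) \mapsto \cdots \mapsto g^{(k)}(\mathbf{x})$ iteratively, and at each stage applying the binarization $\pi$ (from the Preliminaries) to the neuron values to read off the bits of the activation pattern. Because $g^{(k+1)}(\mathbf{x}) = g^{(1)}(g^{(k)}(\mathbf{x}))$, the intermediate vector $g^{(k)}(\mathbf{x})$ is a fixed-dimensional object that feeds directly into the next block. I would then argue by induction (or simply by counting) that the total work is the sum over $k$ blocks of the constant per-block cost, plus the constant cost of the final layer, giving a total of $c \cdot k + c' = O(k)$ operations.

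The second ingredient I would invoke is the isomorphism established in Section~\ref{sec:triadic_expansion} between the activation pattern $\pi_{\mathcal{N}_A}$ and the triadic expansion produced by Alg.~\ref{alg:algebraic_triadic_expansion}. This lets me present the computation equivalently as reading off $k$ triadic digits of $x_1 \in I_1 \cup I_3$, each digit requiring only a constant number of comparisons and rescalings per recursion level; this reinforces the claim that exactly one "symbol" of the pattern is produced per level of recursion. Either route yields the same linear count, so I would use the recursion-based evaluation as the primary argument and mention the triadic expansion as a confirming perspective.

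I do not expect a serious obstacle here, since the statement is essentially a bookkeeping claim about a depth-$k$, constant-width recursive network. The one point requiring care is the model of computation: the bound $O(k)$ treats each arithmetic operation (multiplication, addition, comparison for the binarization threshold $v_i > 0$) as unit cost, ignoring bit-length growth of intermediate values across recursion levels. I would state this real-RAM / unit-cost assumption explicitly, noting that it is the natural and standard convention for counting operations in this setting; under it, the per-level cost is genuinely constant and the linear bound follows immediately.
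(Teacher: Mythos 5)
Your proposal is correct, and it proves the stated claim by a more direct route than the paper does. You count operations along the recursive evaluation $\mathbf{x}\mapsto g^{(1)}(\mathbf{x})\mapsto\cdots\mapsto g^{(k)}(\mathbf{x})$: each block $g^{(1)}$ has fixed width (five neurons, determined by $W_1\in\mathbb{R}^{3\times2}$ and $W_2\in\mathbb{R}^{2\times3}$), so each level costs $O(1)$ arithmetic and binarization steps and the total is $O(k)$. The paper instead argues via the geometry of the linear-region partition: it projects the tessellation onto $[0,1]\times\{0\}$, identifies the resulting partition with the level-$k$ Cantor construction, and equates locating $x_1$ in that partition with reading off $k$ triadic digits, concluding that the \emph{minimal} complexity of this decision problem is $O(k)$. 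The two arguments buy different things. Yours is the cleaner and more rigorous proof of the upper bound actually asserted in the lemma (``can be computed in $O(k)$ operations''), and your explicit statement of the unit-cost model is a point of care the paper omits. The paper's phrasing is aimed at something slightly stronger --- a minimality claim about the partition --- which it needs later for the minimal-description-length theorem, but which is not what the lemma as stated requires; as a proof of the lemma itself, the paper's version is arguably less direct than yours. Your secondary appeal to the triadic-expansion isomorphism essentially recovers the paper's main line as a corroborating perspective, so the two approaches meet in the middle. No gap.
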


\begin{proof}
The complexity (as measured by the description length~\citep{Grunwald2005}) of the decision manifold given by Eq.~\eqref{eq:Rk} is equal to the complexity of its partition  into the linear regions defined in Sec.~\ref{sec:preliminaries}. 
To determine the minimal complexity of the partition it is necessary to solve  the following decision problem for $x_1$. Consider the partition into linear regions and its projection along the $y$-axis onto the $[0,1]\times \{0\}$.  Note that the resulting partition of $[0,1]$ is the same we obtain by constructing the Cantor set of level $k$, which corresponds to the triadic number expansion up to the $k^\text{th}$ digit.
The minimal complexity of solving this decision problem is therefore  $O(k)$.
\end{proof}

The proof of Lemma~\ref{lemma:complexity_} indicates the 1-1 correspondence between the triadic number expansions up to the $k^\text{th}$ digit and the activation pattern $\pi(x_1)$ of $x_1\in[0,1]$ under $\widetilde{\mathcal{N}}^{(k)}_A$, where $\widetilde{\mathcal{N}}^{(k)}_A$  represents the $1$-dim ReLU network up to the recursion level $k$, analogous to the construction given by Eq.~\eqref{eq:A}.
Observe that the outset (as in Def.~\ref{def:classes_cantornet}) intervals $I_1,I_2,I_3$ (as in  Fig.~\ref{fig:cantor_set_cantornet}, left) can be described with  activation patterns $\pi_{\mathcal{N}_A}(x)=[10111]$, for any $x\in I_1$, $\pi_{\mathcal{N}_A}(x)=[00101],$ for any $x\in I_2$ or $ \pi_{\mathcal{N}_A}(x)=[01111]$ for any $x\in I_3$ in the recursion-based representation (here we do not remove  neurons with constant values). 
Indeed, to obtain $\pi_{x}$ for $x\in I_1$ take any point $(x,y)\in I_1\times\{y\in [0,1]: y<f_1(x)\}$ ($f_1$ and $f_2$ as in Fig.~\ref{fig:cantor_set_cantornet}, left). After applying $\mathbf{x}\mapsto g^{(1)}(\mathbf{x})$, binarizing every neuron's value and concatenating them into a vector, we obtain a 5-dim vector (because $W_1\in\mathbb{R}^{\underline{3}\times 2}$, $W_2\in\mathbb{R}^{\underline{2}\times 3}$, and we  omit  $W_L, b_L$). In an analogous manner, we can obtain $\pi_{I_2}$ and $\pi_{I_3}$. Next, observe that each of the intervals $I_i$ can be further partitioned into $I_{i1}, I_{i2}, I_{i3}$, respectively for the left, center, and right segments. To describe these new segments, we increase the recursion level to $k=2$. It turns out that $I_{11}=[10111;10111]$,  a repetition of the pattern  $\pi_{I_1}$, and so forth for the remaining segments. This construction is iterated $k$ times, providing the sequence of subintervals 
\begin{equation}
\label{eq:subinterval}
    (I_{i_t})_{t=1}^{k}.
\end{equation} 

\subsection{Alternative Representation of CantorNet}
\label{ss:ModelB}
Observe that the pre-image of zero under a ReLU function (including shifting and scaling) is a closed set in $[0,1]^2$.
Since we consider a decision manifold $\mathcal{M}$ as a closed subset, which we referred to as  inset in Def.~\ref{def:classes_cantornet} in  Section~\ref{sec:CantorNet},
we use the closed pre-image of zero under the ReLU network $\mathcal{N}$ to model decision manifolds given by Eq.~\eqref{eq:Rk}. 
This means that the statement ``$\mathbf{x} \in \mathcal{M}$'' is true if $\mathcal{N}(\mathbf{x})=0$, and  the statement "$\mathbf{x} \in \mathcal{M}$" is false if 
$\mathcal{N}(\mathbf{x})>0$. 
This way the $\min$ operation refers to the union of sets, i.e., logical disjunction ``OR'', while the $\max$ operation refers to the  intersection of two sets, i.e., logical conjunction ``AND''. 
Note that the laws of Boolean logics also translate to this interpretation~\citep{klir1995fuzzy}. 
Further, note that any (non-convex) polytope is a geometric body that can be represented as the union of intersections of convex  polytopes.
A convex polytope can also  be represented as intersection of half-spaces, like the pre-image of zero under the ReLU function, i.e., a single-layered ReLU network. Thus, a decision manifold $\mathcal{M}$ given by a (non-convex) polytope can be represented by the minimum of maxima of single layered ReLU networks. 
Since the minimum operation can also be represented in terms of the $\max$ function, we obtain a ReLU representation for which it is justified to 
call it a \textit{disjunctive normal form} (DNF), as outlined by~\citeauthor{Moser22tessellationfiltering}.
For the simplest case, consider the function $h_1(x,y)$ (Fig.~\ref{fig:preimage}, left) that splits the unit square $[0,1]^2$ into parts where it takes positive and negative values, denoted  with $(1)$ and $(0)$, respectively. Observe that $(x,y)\in (\pi\circ h_1)^{(-1)}(0)$ if and only if $\max\{h_1(x,y),0\}=0$ (where $\pi$ is the binarization operator  described in Sec.~\ref{sec:preliminaries}). Similarly, $(x,y)\in (\pi\circ h_1)^{(-1)}(1)$ if and only if $\max\{h_1(x,y),0\}=h_1(x,y)$. In the case of two hyperplanes (Fig.~\ref{fig:preimage}, right), the polytope denoted with $(0,0)$  can be represented as $(x,y)\in (\pi\circ h_1)^{(-1)}(0)\cap (\pi\circ h_2)^{(-1)}(0)$ if and only if $ \max\{h_1(x,y), h_2(x,y),0\}=0$, and similarly for  the remaining  polytopes. 

Fig.~\ref{fig:1} represents the partition of the decision manifold given by Eq.~(\ref{eq:Rk}) into convex polytopes for $k=2$ and $k=3$, respectively. We utilize the minimum function to form their union, obtaining a ReLU network $\mathcal{N}_B^{(k)}$ that yields the same decision manifold as the recursion-based $\mathcal{N}_A^{(k)}$.

\begin{figure}
\centering
\includegraphics[width=0.55\textwidth]{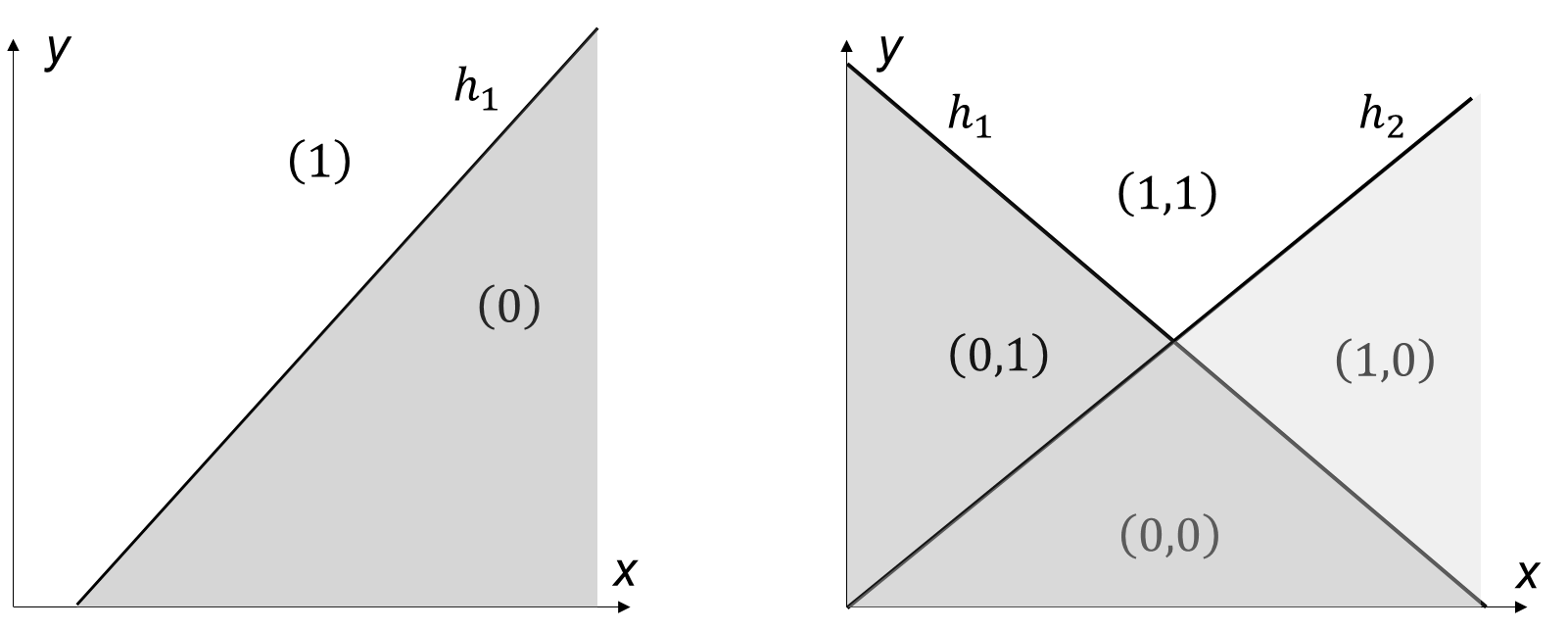}
\caption{The greyed regions  can be represented as a 0-preimage of $\pi\circ h_1$ (left), and the union of the 0-preimages of $\pi\circ h_1$ and $\pi\circ h_2$ (right).}
\label{fig:preimage}
\end{figure}

\begin{proposition}
\label{prop:dnf}
At the recursion level $k$, the decision boundary  given by Eq.~\eqref{eq:Rk} can be constructed as a disjunctive normal form 
\begin{equation}
\label{eq:dnf_formula}
    \{x,y\in[0,1]^2:\min(h_1(x,y),h_2(x,y),h_{r(k)}(x,y), D_1, \ldots, D_{\lfloor r(k)/4\rfloor+1},0)=0\},
\end{equation}
where $h_i:\mathbb{R}^2\to\mathbb{R}$ are  affine functions indexed with $i=1,\ldots,r(k)$. The labeling function $r(k):\mathbb{N}\to\mathbb{N}$ is given as $r(k)=2^{k+1}-1$ (see Fig.~\ref{fig:1}), and $D:\mathbb{R}^2\to\mathbb{N}$ denotes a ``dent'' given by 
\begin{equation}
\label{eq:dent}
D_l:=\max(h_{4l-1}, h_{4l}, h_{4l+1}).
\end{equation}
\end{proposition}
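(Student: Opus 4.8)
The plan is to prove the identity in two stages: first reduce the statement to a claim about the one-dimensional boundary profile $\gamma_k(x):=(A^{(k)}(x)+1)/2$, and then convert the hypograph $R_k=\{(x,y):y\le\gamma_k(x)\}$ of Eq.~\eqref{eq:Rk} into the stated $\min/\max$ formula using the set-theoretic dictionary already fixed in Section~\ref{ss:ModelB}: for affine $h$ the zero set of $\sigma\circ h$ is the half-plane $\{h\le 0\}$, $\max$ realises intersection and $\min$ realises union of such half-planes. The central elementary identity I would record is that for any finite family of affine maps $\ell_j$ one has $\{y\le\max_j\ell_j(x)\}=\bigcup_j\{y\le\ell_j(x)\}$, so that a convex cap of $\gamma_k$ (a maximal stretch on which $\gamma_k$ is convex) contributes a union of half-planes, i.e.\ exactly one $\max$-term of the form $D_l$ in Eq.~\eqref{eq:dent}.

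First I would establish, by induction on $k$ using the nesting $A^{(k+1)}=A\circ A^{(k)}$ of Eq.~\eqref{eq:nestedA}, that $\gamma_k$ is continuous and piecewise affine with exactly $r(k)=2^{k+1}-1$ linear pieces, occurring in the self-similar pattern ``descending ramp, low plateau (value $\tfrac12$), ascending ramp'' repeated across $2^{k-1}$ valleys and separated by high plateaus (value $1$). The inductive step only tracks what happens to one ramp and one plateau of $A^{(k)}$ under a further application of $A$: a ramp sweeps the full range $[0,1]$ and is replaced by a complete valley (three new pieces), a region with $A^{(k)}\in[\tfrac13,\tfrac23]$ is sent to $0$, and a region with $A^{(k)}=0$ is sent to $1$, reproducing the plateaus. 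This gives the recurrence $p(k+1)=2p(k)+1$ with $p(1)=3$, hence $p(k)=2^{k+1}-1=r(k)$, and simultaneously fixes the number of valleys at $2^{k-1}$. Extending these $r(k)$ pieces to supporting lines of $\mathbb{R}^2$, oriented so that $h_i\ge 0$ encodes ``below the $i$-th line'', produces the affine functions $h_1,\dots,h_{r(k)}$ of Eq.~\eqref{eq:dnf_formula}.

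Next I would read off the lattice (min-of-max) representation of $\gamma_k$ from this pattern: each maximal convex portion is a valley, a ``down--flat--up'' triple of consecutive pieces equal to the upper envelope of its three supporting lines, while the high plateaus contribute the globally dominating constraint $y\le 1$ and the extreme boundary pieces are handled singly. Writing $\gamma_k=\min_l(\mathrm{cap}_l)$ and applying the envelope identity piece-by-piece yields $R_k=\bigcap_l\{y\le\max_{j\in\mathrm{cap}_l}\ell_j\}$, which is precisely $\{\min(h_1,h_2,h_{r(k)},D_1,\dots,D_{\lfloor r(k)/4\rfloor+1},0)=0\}$ once one identifies each triple cap with a dent $D_l=\max(h_{4l-1},h_{4l},h_{4l+1})$. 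The count is consistent because the number of dents equals the number of valleys, $2^{k-1}=\lfloor(2^{k+1}-1)/4\rfloor+1=\lfloor r(k)/4\rfloor+1$.

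The main obstacle is this last matching step: verifying that the valley/plateau labelling produced by the recursion coincides, index for index, with the grouping $(4l-1,4l,4l+1)$ and the singleton list $h_1,h_2,h_{r(k)}$ used in Fig.~\ref{fig:1}, and that the sign conventions are chosen so that the trivial plateau constraints, the boundary half-planes and the nontrivial $\max$-dents can all be combined inside a single $\min(\cdot,0)$. I expect this to require carrying the explicit ordering of the pieces through the induction, so that a level-$k$ dent refines into a prescribed block of level-$(k+1)$ dents, rather than merely counting them; this is the only genuinely delicate part, since the geometric decomposition and the $\min/\max$ dictionary are otherwise routine given Section~\ref{ss:ModelB}.
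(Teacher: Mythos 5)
Your overall strategy --- induction on $k$ via the self-similar piecewise-linear profile $\gamma_k=(A^{(k)}+1)/2$, the recurrence $p(k+1)=2p(k)+1$ for the number of affine pieces, and the hypograph/envelope dictionary of Section~\ref{ss:ModelB} --- is essentially the paper's (Appendix~\ref{app:proof_min_max}), and your bookkeeping of ramps, plateaus and valleys is considerably more explicit than the paper's two-line inductive sketch. That part is sound.

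The gap is in the final identification of your convex caps with the dents of Eq.~\eqref{eq:dent}. You group the pieces by \emph{valleys} (``down--flat--up'', the maximal convex stretches of $\gamma_k$) and declare each such cap to be one $D_l$. But the index pattern $(4l-1,4l,4l+1)$ is centred on every fourth piece, which under left-to-right numbering is a \emph{high plateau}: each $D_l$ groups the ascending ramp of one valley, the plateau at height $1$, and the descending ramp of the next valley --- a concave ``hill'' between consecutive valleys, not a valley. This is visible already in the paper's $k=2$ instance $\min(h_1,h_2,h_7,\max(h_3,h_4,h_5))$, where $h_4$ is the top plateau and the three singletons $h_1,h_2,h_7$ (the outermost ramps together with the base level $y=\tfrac12$) have no counterpart in your grouping. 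The mismatch is not merely notational: the hypograph of a convex cap is a \emph{union} of half-planes, i.e.\ a $\min$-term in the paper's min$=$OR, max$=$AND dictionary, whereas $D_l$ is a $\max$-term, an intersection. Your decomposition therefore produces the dual conjunctive form $R_k=\bigcap_l\bigcup_j\{y\le\ell_{l,j}\}$ (a max of mins at the function level), which describes the same set but is not formula~\eqref{eq:dnf_formula}; converting it into the stated disjunction of the outer half-spaces with the hill-intersections is exactly the regrouping you defer as ``the only genuinely delicate part'', and as planned it would fail because you are pairing the wrong triples. Relatedly, your consistency check matches $\lfloor r(k)/4\rfloor+1=2^{k-1}$ to the number of valleys, whereas the number of hills is $2^{k-1}-1$ (consistent with the single dent in the $k=2$ row of Table~\ref{tab:min_max_shape_repr_B}); matching valleys rather than hills hides this off-by-one rather than resolving it.
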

To provide a better overview for the reader, in Table~\ref{tab:min_max_shape_repr_B} we list the constructions for the different recursion levels.
\begin{figure}[ht]
\centering
\includegraphics[width=0.67\textwidth]{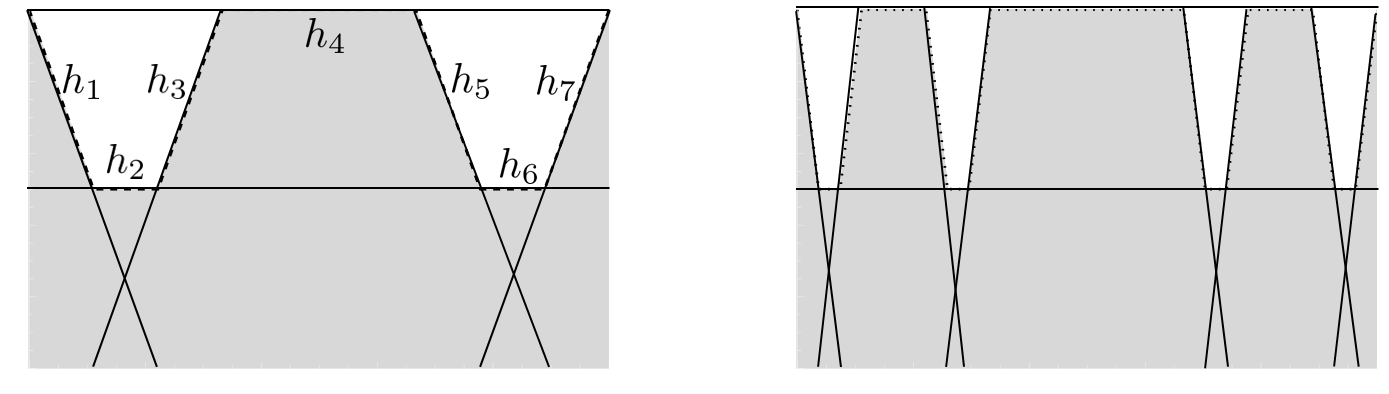}
\caption{Decision surfaces ($k=2,3$) of~\eqref{eq:Rk} with labeled functions $h_i$.}
\label{fig:1}
\end{figure}
\begin{table}[h]
    \centering
    \caption{Min/max  shape description for $k$ recursions.}
    \begin{tabular}{c|c}
      recursion & formula of decision manifold $R_k$ \\ \hline
        $1$ &  $\{x,y:\min(h_1,h_2,h_3)=0\}$\\
        $2$ & $\{x,y:\min(\underbrace{h_1,h_2,h_7,}_{} \underbrace{\max(h_3,h_4,h_5)}_{})=0\}$\\
        $\ldots$ & $\ldots$ \\
        $k$  & $\{x,y:\min(\underbrace{h_1,h_2,h_{r(k)},}_{\text{external half-spaces}} \underbrace{D_1, \ldots, D_{\lfloor r(k)/4\rfloor+1}}_{\text{``dents''~\eqref{eq:dent}}})=0\}$
    \end{tabular}
    \label{tab:min_max_shape_repr_B}
\end{table}
We sketch an inductive proof of the Proposition~\ref{prop:dnf} in Appendix~\ref{app:proof_min_max}. Further, note that the min function can be expressed as a ReLU network (Appendix~\ref{app:min_as_relu}).

\section{Complexity of Neural Representations}

The complexity of an object can be measured in many ways, for example its  description's length~\citep{kolmogorov1965}. 
Preference for a more concise description can be argued from multiple angles, for example using the principle of the Occam's razor or lower Kolmogorov complexity. The latter is typically non-computable, necessitating reliance on its approximations. 
However,   in case of models with consistent decision boundaries (e.g., neural networks), their size, both in terms of the number of layers and the number of neurons,  can be used as  approximation for complexity~\citep{Grunwald2005}.

\begin{lemma}
\label{lemma:neurons}
    At the recursion  level $k$, the number of neurons   of the recursion-based representation is $O(k)$, while for  the disjunctive normal form representation it  is $O(2^{k})$. 
\end{lemma}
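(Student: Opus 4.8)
The plan is to treat each claim as a direct neuron count read off from the explicit constructions already given, so that the lemma reduces to bookkeeping layered on top of Proposition~\ref{prop:dnf}.

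First I would handle the recursion-based representation $\mathcal{N}_A^{(k)}$. The single building block $g^{(1)}$ in Eq.~\eqref{eq:g1} consists of a first affine map with $W_1\in\mathbb{R}^{3\times 2}$ followed by a ReLU, contributing three hidden neurons, and a second affine map with $W_2\in\mathbb{R}^{2\times 3}$ followed by a ReLU, contributing two more, for five hidden neurons per application. Since the recursion $g^{(k+1)}=g^{(1)}(g^{(k)})$ merely stacks $k$ identical copies of $g^{(1)}$ (each consuming the $2$-dimensional output of the previous one), and the network $\mathcal{N}_A^{(k)}$ in Eq.~\eqref{eq:A} appends only the fixed output stage $\sigma$ and $(W_L,b_L)$, the total hidden-neuron count is $5k+O(1)=O(k)$.

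Next I would handle the DNF representation $\mathcal{N}_B^{(k)}$ by invoking Proposition~\ref{prop:dnf}. The formula in Eq.~\eqref{eq:dnf_formula} is built from the affine functions $h_1,\ldots,h_{r(k)}$ with $r(k)=2^{k+1}-1$, each realized by a single ReLU neuron $\sigma(h_i)$, so the half-spaces alone account for $r(k)=O(2^k)$ neurons. It then remains to express the outer $\min$ and the maxima (the dents $D_l$ of Eq.~\eqref{eq:dent}) as a ReLU network. Using $\max(a,b)=b+\sigma(a-b)$ and $\min(a,b)=b-\sigma(b-a)$, aggregating $m$ terms costs $O(m)$ additional neurons; the dents aggregate three terms each and the outer $\min$ aggregates $O(r(k))$ terms, so the total aggregation overhead is $O(r(k))$ as well. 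Summing the two contributions yields $O(2^{k+1}-1)=O(2^k)$.

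The hard part is essentially already discharged in Proposition~\ref{prop:dnf}, which pins the number of half-spaces to $r(k)=2^{k+1}-1$; given that, the only genuine care needed is to confirm that converting the $\min$/$\max$ combinators into ReLUs inflates the count by at most a constant factor per aggregated term, so that the exponential half-space count dominates and the bound $O(2^k)$ survives the conversion. I would close by stressing that both $\mathcal{N}_A^{(k)}$ and $\mathcal{N}_B^{(k)}$ describe the \emph{identical} decision manifold $R_k$ of Eq.~\eqref{eq:Rk}, which is precisely what legitimizes comparing their description lengths and exhibits the linear-versus-exponential gap the paper advertises.
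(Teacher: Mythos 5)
Your proposal is correct and follows essentially the same route as the paper: the recursion-based count is a direct $5k+O(1)$ bookkeeping (which the paper simply calls ``straightforward''), and for the DNF you count the $\Theta(2^k)$ half-spaces and argue the $\min$/$\max$ aggregation adds only a constant factor, exactly as the paper does via the $\mathbf{A},\mathbf{S}$ gadgets of Appendix~\ref{app:min_as_relu} and the geometric series $\sum_i 2^{-i}$. The only detail to watch is that $\min(a,b)=b-\sigma(b-a)$ requires carrying $b$ through the ReLU layer, costing three neurons per pair rather than one (as in $\sigma(b)-\sigma(-b)-\sigma(b-a)$), which leaves your $O(m)$ aggregation claim and the final $O(2^k)$ bound intact.
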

\begin{proof}
For the recursion-based representation the result is straightforward.
The DNF construction  relies on the application of $\mathbf{A}$ and $\mathbf{S}$ (see App.~\ref{app:proof_min_max}). For  recursion level $k$, we have (recall Eq.~\eqref{eq:dnf_formula})
$3+3(\lfloor r(k)/4\rfloor +1)=3\lfloor r(k)/4\rfloor+6=:z(k)$
rows of  $\mathbf{A}$, equal to the number of neurons.   By applying $\mathbf{A}$ and $\mathbf{S}$ at least $\lceil \log_2 z(k) \rceil$ times~\citep{arora16understandingReLU}, we arrive at \begin{equation*}
\label{eq:dnf_kolmogorov_complexity}
    \left(\frac 34 \left(2^{k+1}-1\right)+6 \right)\sum_{i=0}^{\lceil \log_2 z(k)\rceil}\frac{1}{2^i}\leq \frac 32 \left(2^{k+1}-1\right) = O(2^{k}).
\end{equation*}
\end{proof}
It requires an algorithm of Kolmogorov complexity of order $k$ to enumerate all numbers in $[0,1]$ with triadic number expansion up to $k$ digits. Since the  recursion-based ReLU net $\mathcal{N}_A$ is constructed by a repetitive application  of two layers with constant number of neurons (namely five) and the same weights,  its Kolmogorov complexity after $k$ iterations  is of order $k$.  As a recursion step given by Eq.~\eqref{eq:A} and Eq.~\eqref{eq:g1} is equivalent to a recursion in the triadic number expansion (Alg.~\ref{alg:algebraic_triadic_expansion}), which is of the minimal order of Kolmogorov complexity, there cannot exist an equivalent ReLU network of strictly lower order of Kolmogorov complexity.
This means that $\mathcal{N}_A$ is of minimal description length in terms of order of the number of neurons $N=N(k)$, thus 
\begin{equation}
    \label{eq:complextiy_cantornet}O(N(k))=O(k). 
\end{equation}
\begin{theorem} 
   The  recursion-based ReLU representation given by the Eq.~\eqref{eq:ReprA} is of minimal complexity order in terms of the number of neurons (in the sense of Eq.~\eqref{eq:complextiy_cantornet}).
\end{theorem}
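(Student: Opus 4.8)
The plan is to establish the claim as a matching pair of bounds on the least number of neurons $N(k)$ among \emph{all} ReLU nets $\mathcal{N}$ with $\mathcal{N}^{(-1)}(0)=R_k$. The upper bound is already available: Lemma~\ref{lemma:neurons} exhibits the recursion-based net with $N(k)=O(k)$. So everything reduces to the lower bound, that every such $\mathcal{N}$ needs $N(k)=\Omega(k)$ neurons; together these give $N(k)=\Theta(k)$ and identify $\mathcal{N}_A^{(k)}$ as optimal in the order sense of Eq.~\eqref{eq:complextiy_cantornet}.

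For the lower bound I would run a region-counting argument built on the one-to-one correspondence between activation patterns and linear regions from Section~\ref{sec:preliminaries}. A net with $N$ neurons realizes activation patterns in $\{0,1\}^N$, hence has at most $2^N$ linear regions. On the other side, the boundary of $R_k$ (the decision curve where $\mathcal{N}$ passes from $0$ to positive values) is the graph $y=(A^{(k)}(x)+1)/2$, which, by the construction behind Proposition~\ref{prop:dnf}, is piecewise affine with exactly $r(k)=2^{k+1}-1$ maximal affine pieces and therefore $\Theta(2^k)$ genuine corners. Since $\mathcal{N}$ is affine on each linear region, its zero boundary is straight inside any single region; consequently the $\Theta(2^k)$ affine pieces of the decision curve meet the closures of at least $\Omega(2^k)$ distinct regions. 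Chaining $\,c\,2^k\le\#\text{regions}\le 2^N\,$ yields $N\ge k-O(1)=\Omega(k)$.

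A second, information-theoretic route formalizes the remarks preceding the theorem and can serve as corroboration. By Lemma~\ref{lemma:complexity_} and the isomorphism with the triadic expansion, deciding inset-versus-outset membership for a point amounts to reading the first $k$ triadic digits of $x_1$, a task of description complexity $\Theta(k)$. Because the block in Eq.~\eqref{eq:ReprA} has fixed weights that are merely reused $k$ times, each neuron contributes only $O(1)$ to the description of $\mathcal{N}_A^{(k)}$, so no net solving the same decision problem can be of strictly smaller order. I would keep the region count as the rigorous core and cite this route only as supporting intuition, since Kolmogorov complexity and raw neuron count are a priori different quantities.

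The hard part will be the universal quantifier in the lower bound: the estimate must survive for an adversarially chosen $\mathcal{N}$, not just for the symmetric construction. The delicate step is ruling out that many corners of the decision curve are absorbed into few region boundaries through collinearity or cancellation among overlapping affine pieces. I would handle this locally, fixing a small disc around each corner and using that an affine function has a straight (cornerless) zero set, so each corner forces a change of linear region in its neighborhood; distinct corners in distinct regions then give the $\Omega(2^k)$ count. A reassuring point is that the theorem only claims minimality \emph{of order}, so the $\Omega(\cdot)$ and $O(\cdot)$ bounds need match only up to constants, and I never have to compute the exact optimal neuron count.
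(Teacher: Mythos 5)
Your proposal is correct in strategy but takes a genuinely different route from the paper. The paper gives no self-contained proof inside a proof environment; its justification is the paragraph immediately preceding the theorem, which is essentially your \emph{secondary}, information-theoretic route: the recursion step of Eq.~\eqref{eq:A}--\eqref{eq:g1} is identified with one step of the triadic expansion (Alg.~\ref{alg:algebraic_triadic_expansion}), enumerating $k$ triadic digits is asserted to require Kolmogorov complexity of order $k$, the repeated constant-size block gives the matching upper bound, and the conclusion is transferred to the neuron count $N(k)$. You correctly identify the weak joint in that chain --- Kolmogorov complexity and raw neuron count are a priori different quantities, and the paper does not argue that an adversarial competitor with $o(k)$ neurons would also have description length $o(k)$ (its weights could in principle encode much more than $O(1)$ bits each). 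Your primary route --- bounding the number of linear regions by $2^N$ via the activation-pattern correspondence of Section~\ref{sec:preliminaries}, counting the $\Theta(2^k)$ genuine corners of the graph of $(A^{(k)}(x)+1)/2$, and using that an affine function restricted to a convex linear region has a straight zero set --- sidesteps that bridge entirely and yields a direct $\Omega(k)$ lower bound on neurons for \emph{every} net in the CantorNet family; it is arguably the more rigorous argument for the claim as literally stated, at the price of the local analysis you flag. Two points there deserve explicit care: (i) a maximal affine piece of the decision curve may lie entirely inside the region-boundary arrangement rather than meeting an open region, though the arrangement's size is itself controlled by $N$, so the asymptotic conclusion survives; and (ii) the plateaus of $A^{(k)}$ at equal heights are collinear, so distinct flat pieces lie on the same line and could a priori be served by one region --- convexity of linear regions rules this out, but it must be said. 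What the paper's route buys, in exchange for its informality, is the conceptual link to the triadic-expansion isomorphism and the minimum-description-length framing that motivates the whole construction.
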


This way, we obtain an example of a ReLU network of proven minimal description length in terms of its number of neurons,  a property  hardly provable  by  statistical means.
Observe that the above does not hold for singular numbers from $[0,1]$: if we consider $x=\frac 16=0...0...02...20...02...2...0...0$,  with $n=k^2$ digits after the ternary point arranged in $k$ alternating blocks of zeros and twos, then it has Kolmogorov complexity $O(k) = O(\sqrt{n})$.
Note that both representations have the same order of the number of layers.

\begin{lemma}
\label{lemma:layers}
    At the recursion level $k$ both described representations of CantorNet have $O(k)$ layers.
\end{lemma}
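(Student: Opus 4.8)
The plan is to prove that both representations have $O(k)$ layers by counting layers in each construction directly, since the lemma asserts an upper bound on depth rather than any tight matching lower bound. First I would handle the recursion-based representation $\mathcal{N}_A^{(k)}$. By Eq.~\eqref{eq:g1}, the base block $g^{(1)}$ consists of a fixed number of layers: an affine map $W_1\mathbf{x}^T+b_1$ followed by $\sigma$, then $W_2$ followed by $\sigma$, i.e. a constant number $c$ of layers independent of $k$. Since $\mathcal{N}_A^{(k)}$ is defined by composing $g^{(1)}$ with itself $k$ times and then appending the fixed final layer $W_L,b_L$ (Eq.~\eqref{eq:A}), the total depth is $c\cdot k + O(1) = O(k)$. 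This direction is immediate from the recursive definition.

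Next I would treat the disjunctive normal form representation $\mathcal{N}_B^{(k)}$. The DNF is given by the nested $\min$/$\max$ expression in Eq.~\eqref{eq:dnf_formula}, built from the affine functions $h_i$ and the dents $D_l=\max(h_{4l-1},h_{4l},h_{4l+1})$ from Eq.~\eqref{eq:dent}. The key observation is that, as noted just after Proposition~\ref{prop:dnf} and developed in Appendix~\ref{app:min_as_relu}, both $\min$ and $\max$ over a collection of affine functions can be realized exactly by a ReLU subnetwork. Invoking the result of~\cite{arora16understandingReLU} that a $\min$ or $\max$ over $z(k)$ arguments can be implemented with $O(\lceil\log_2 z(k)\rceil)$ layers (by balanced pairwise combination via the two-layer gadget $\mathbf{A},\mathbf{S}$ referenced in Lemma~\ref{lemma:neurons}), the depth of $\mathcal{N}_B^{(k)}$ is $O(\log_2 z(k))$. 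Since $z(k)=3\lfloor r(k)/4\rfloor+6$ with $r(k)=2^{k+1}-1$, we have $z(k)=\Theta(2^{k})$, hence $\log_2 z(k)=\Theta(k)$. Accounting for the constant extra depth of the inner dents (themselves a $\max$ over three affine functions, contributing $O(1)$) and the outer affine layer computing the $h_i$, the total is $O(k)$ layers.

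Combining the two bounds, both $\mathcal{N}_A^{(k)}$ and $\mathcal{N}_B^{(k)}$ have depth $O(k)$, which proves the lemma. The main obstacle, such as it is, lies in the DNF direction: one must be careful that the logarithmic-depth realization of a wide $\min$ is applied to the \emph{flattened} collection of all affine arguments (the external half-spaces together with all the dents' constituent planes), so that the total argument count is $z(k)=\Theta(2^k)$ and the resulting $\log$ yields exactly $\Theta(k)$ rather than some larger composition of logarithms. The two-level structure (dents nested inside the outer $\min$) could naively suggest $O(\log\log)$ or additive $O(1)+O(\log)$ behavior, and I would make explicit that the inner $\max$ over three planes adds only a constant to the depth, so the nesting does not inflate the order beyond $O(k)$. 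Everything else is a direct consequence of the recursive construction and the cited logarithmic-depth $\min$/$\max$ gadget.
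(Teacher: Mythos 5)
Your proof is correct and follows essentially the route the paper implies: the paper states this lemma without an explicit proof, but the ingredients you use are exactly those already present in its proof of Lemma~\ref{lemma:neurons} (constant-depth block $g^{(1)}$ composed $k$ times for $\mathcal{N}_A^{(k)}$, and $\lceil\log_2 z(k)\rceil=\Theta(k)$ applications of the $\mathbf{S},\mathbf{A}$ pairwise-$\min$ gadget for the DNF). Your explicit care that the inner $\max$ over three planes adds only $O(1)$ depth, so the nesting does not change the order, is a worthwhile clarification but not a departure from the paper's argument.
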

Note that both constructions are equivalent as understood by the equality of their preimages. Though simple by construction, the family of CantorNet  ReLU networks is rich in terms of representation variants, ranging from a minimal (linear in $k$) complex solution  to an exponentially  complex one. An intermediate example  would be starting with the recursion based representation for a number of layers, and then concatenating corresponding disjunctive normal form representation.

\section{Conclusions and Discussion}
\label{s:Conclusions}

In this paper we have proposed CantorNet, a family of ReLU neural networks inspired by fractal geometry that can be tuned arbitrarily close to a fractal.
The resulting geometry of CantorNet's decision manifold, the induced tessellation and activation patterns can be derived in two ways. 
This  makes it a natural candidate for studying concepts related to the activation space. 
Note that, although CantorNet is a hand-designed example, it is not an abstract invention -  real world data, such as images, music, videos also display fractal nature, as understood by self-similarity. We believe that our work, although seemingly remote from the current mainstream of the machine learning research, will provide the community with a set of examples
to study ReLU neural networks as mappings between the Euclidean input space and the space of activation patterns, currently under investigation.

\section*{Acknowledgments}
SCCH's research was carried out under the Austrian COMET program (project S3AI with FFG no. 872172), which is funded by the Austrian ministries BMK, BMDW, and the province of Upper Austria.


\begin{thebibliography}{23}
\providecommand{\natexlab}[1]{#1}
\providecommand{\url}[1]{\texttt{#1}}
\expandafter\ifx\csname urlstyle\endcsname\relax
  \providecommand{\doi}[1]{doi: #1}\else
  \providecommand{\doi}{doi: \begingroup \urlstyle{rm}\Url}\fi

\bibitem[Alemán-Flores and Álvarez León(2004)]{AlemanFlores2004}
Miguel Alemán-Flores and Luis Álvarez León.
\newblock Video segmentation through multiscale texture analysis.
\newblock In \emph{Image Analysis and Recognition, ICIAR 2004, Lecture Notes in Computer Science}, volume 3212, pages 339--346. Springer, Berlin, Heidelberg, 2004.

\bibitem[{Arora} et~al.(2018){Arora}, {Basu}, {Mianjy}, and {Mukherjee}]{arora16understandingReLU}
Raman {Arora}, Amitabh {Basu}, Poorya {Mianjy}, and Anirbit {Mukherjee}.
\newblock {Understanding Deep Neural Networks with Rectified Linear Units}.
\newblock \emph{ICLR}, 2018.

\bibitem[Bianchini and Scarselli(2014)]{BianchiniS14}
Monica Bianchini and Franco Scarselli.
\newblock On the complexity of neural network classifiers: {A} comparison between shallow and deep architectures.
\newblock \emph{{IEEE} Trans. NN Learn. Syst.}, 25\penalty0 (8):\penalty0 1553--1565, 2014.

\bibitem[Cantor(1879)]{cantor1879ueber}
Georg Cantor.
\newblock Ueber unendliche, lineare punktmannichfaltigkeiten.
\newblock \emph{Mathematische Annalen}, 15\penalty0 (1):\penalty0 1--7, 1879.

\bibitem[Foote(1999)]{Foote1999}
Jonathan Foote.
\newblock Visualizing music and audio using self-similarity.
\newblock In \emph{Proceedings of the Seventh ACM International Conference on Multimedia (Part 1)}, pages 77--80, 1999.
\newblock \doi{10.1145/319463.319472}.

\bibitem[Gilmer et~al.(2018)Gilmer, Metz, Faghri, Schoenholz, Raghu, Wattenberg, and Goodfellow]{gilmer2018adversarial}
Justin Gilmer, Luke Metz, Fartash Faghri, Samuel~S Schoenholz, Maithra Raghu, Martin Wattenberg, and Ian Goodfellow.
\newblock Adversarial spheres.
\newblock \emph{arXiv preprint arXiv:1801.02774}, 2018.

\bibitem[Grünwald et~al.(2005)Grünwald, Myung, and Pitt]{Grunwald2005}
Peter~D. Grünwald, Jay~Injae Myung, and Mark~A. Pitt, editors.
\newblock \emph{Advances in Minimum Description Length: Theory and Applications}.
\newblock Neural Information Processing. MIT Press, Cambridge, MA, 2005.
\newblock ISBN 9780262072625.

\bibitem[Klir and Yuan(1995)]{klir1995fuzzy}
George~J. Klir and Bo~Yuan.
\newblock \emph{Fuzzy Sets and Fuzzy Logic: Theory and Applications}.
\newblock Prentice Hall, Upper Saddle River, NJ, USA, 1995.
\newblock ISBN 9780131011717.

\bibitem[Kolmogorov(1965)]{kolmogorov1965}
Andrey~N Kolmogorov.
\newblock Three approaches to the quantitative definition of information.
\newblock \emph{Problems of Information Transmission}, 1\penalty0 (1):\penalty0 1--7, 1965.

\bibitem[Krizhevsky et~al.(2012)Krizhevsky, Sutskever, and Hinton]{Krizhevsky2012ImageNetCompetition}
Alex Krizhevsky, Ilya Sutskever, and Geoffrey~E Hinton.
\newblock Imagenet classification with deep convolutional neural networks.
\newblock In \emph{NeurIPS}, 2012.

\bibitem[Maas et~al.(2013)Maas, Hannun, and Ng]{Maas13rectifiernonlinearities}
Andrew~L. Maas, Awni~Y. Hannun, and Andrew~Y. Ng.
\newblock Rectifier nonlinearities improve neural network acoustic models.
\newblock In \emph{ICML Workshop on Deep Learning for Audio, Speech and Language Processing}, 2013.

\bibitem[Mandelbrot(1983)]{mandelbrot1983fractal}
Benoit~B. Mandelbrot.
\newblock \emph{The Fractal Geometry of Nature}.
\newblock Macmillan, 1983.
\newblock ISBN 978-0-7167-1186-5.

\bibitem[Mandelbrot(1995)]{Mandelbrot1995}
Benoit~B. Mandelbrot.
\newblock Measures of fractal lacunarity: Minkowski content and alternatives.
\newblock In \emph{Fractal Geometry and Stochastics}, Progress in Probability, pages 15--42, Basel, 1995. Birkh{\"a}user Basel.

\bibitem[Minsky and Papert(1969)]{minsky1969perceptrons}
Marvin Minsky and Seymour Papert.
\newblock \emph{Perceptrons: An Introduction to Computational Geometry}.
\newblock MIT Press, 1969.

\bibitem[Mont{\'u}far et~al.(2014)Mont{\'u}far, Pascanu, Cho, and Bengio]{montufar2014number}
Guido~F Mont{\'u}far, Razvan Pascanu, Kyunghyun Cho, and Yoshua Bengio.
\newblock On the number of linear regions of deep neural networks.
\newblock In \emph{NeurIPS}, volume~27, 2014.

\bibitem[Moser et~al.(2022)Moser, Lewandowski, Kargaran, Zellinger, Biggio, and Koutschan]{Moser22tessellationfiltering}
Bernhard~A. Moser, Michal Lewandowski, Somayeh Kargaran, Werner Zellinger, Battista Biggio, and Christoph Koutschan.
\newblock Tessellation-filtering relu neural networks.
\newblock \emph{IJCAI}, 2022.

\bibitem[Schmidhuber(1997)]{schmidhuber1997discovering}
J{\"u}rgen Schmidhuber.
\newblock Discovering neural nets with low kolmogorov complexity and high generalization capability.
\newblock \emph{Neural Networks}, 10\penalty0 (5):\penalty0 857--873, 1997.

\bibitem[Shepeleva et~al.(2020)Shepeleva, Zellinger, Lewandowski, and Moser]{shepeleva2020relu}
Natalia Shepeleva, Werner Zellinger, Michal Lewandowski, and Bernhard Moser.
\newblock Relu code space: A basis for rating network quality besides accuracy.
\newblock \emph{ICLR, NAS workshop}, 2020.

\bibitem[Solomonoff(1964)]{solomonoff1964}
Ray~J Solomonoff.
\newblock A formal theory of inductive inference. part i.
\newblock \emph{Information and Control}, 7\penalty0 (1):\penalty0 1--22, 1964.

\bibitem[Sutton and Barto(2018)]{Sutton-Barto-2018}
R.~S. Sutton and A.~G. Barto.
\newblock \emph{Reinforcement learning: An introduction}.
\newblock MIT press, 2018.

\bibitem[von Koch(1904)]{Koch1904kochcurve}
Helge von Koch.
\newblock Sur une courbe continue sans tangente, obtenue par une construction géométrique élémentaire.
\newblock \emph{Arkiv för matematik, astronomi och fysik}, 1:\penalty0 681--704, 1904.

\bibitem[Wang et~al.(2020)Wang, Wang, and Lian]{Wang2020}
Xiang Wang, Kai Wang, and Shiguo Lian.
\newblock A survey on face data augmentation for the training of deep neural networks.
\newblock \emph{Neural Computing and Applications}, 32\penalty0 (19):\penalty0 15503--15531, Oct 2020.
\newblock ISSN 1433-3058.
\newblock \doi{10.1007/s00521-020-04748-3}.

\bibitem[Zmeskal et~al.(2013)Zmeskal, Dzik, and Vesely]{ZMESKAL2013135}
Oldrich Zmeskal, Petr Dzik, and Michal Vesely.
\newblock Entropy of fractal systems.
\newblock \emph{Computers \& Mathematics with Applications}, 66\penalty0 (2):\penalty0 135--146, 2013.
\newblock ISSN 0898-1221.

\end{thebibliography}

\newpage
\appendix

\section{Activation Code by Triadic Expansion\label{app:algorithm}}

\begin{algorithm}[htbp]
\caption{Activation Code of $\widetilde{\mathcal{N}}^{(k)}_A$ by Triadic Expansion}
\label{alg:algebraic_triadic_expansion}
\KwIn{$x_1 \in [0,1]$, recursion level $k$}
\KwOut{The activation code $\pi:=\pi_{\widetilde{\mathcal{N}}_A}(x_1)$}
Define $f_1(x) := 1 - 3x, f_2(x) := 3x - 2$\;
Define intervals: $I_1:=[0,\frac{1}{3}], I_2:=(\frac{1}{3},\frac{2}{3}), I_3:=[\frac{2}{3},1]$\;
\textbf{Termination} $\gets$ \textbf{False}\;
$\pi$ = [] \tcp*[h]{A pattern holder}\;
$j \gets 0$ \tcp*[h]{Iteration counter}\;

\While{\textbf{not} Termination}{
    $j \gets j + 1$\;
    \If{$j = k$}{
        \textbf{Termination} $\gets$ \textbf{True}\;
    }
    Update interval $J_{i_j} := I_{i_1 \ldots i_j}$, $i_j \in \{1,2,3\}$ \tcp*[h]{see~\eqref{eq:subinterval}}\;
    \eIf{$x_1 \in J_1$}{
        $x_1 \gets f_1(x_1)$\;
        $\pi$.append(0)\;
    }{
        \eIf{$x_1 \in J_2$}{
            \textbf{Termination} $\gets$ \textbf{True} \tcp*[h]{Exit the loop if $x_1$ is in $I_2$}\;
        }{
            $x_1 \gets f_2(x_1)$\;
            $\pi$.append(1)\;
        }
    }
}
\end{algorithm}

\section{Min/max Shape Description of CantorNet}
\label{app:proof_min_max}

\begin{proof}
We   sketch  the inductive proof of Proposition~\ref{prop:dnf}.  For the base case $k=1$, the decision manifold $R_1$ is composed of the union of three half-spaces, expressed as $\{ x,y\in[0,1]^2 :\min(h_1,h_2,h_3)= 0\}$. For $k=2$ (shown on the left in Fig.~\ref{fig:1}), we see a dent in the middle of the figure. Points lying above this dent satisfy the condition $\{x,y\in[0,1]^2 :\max(h_3,h_4,h_5)> 0\}$. To reconstruct the complete decision manifold $R_2$, we unionize the outer half-spaces $h_1,h_2,h_7$ with the aforementioned dent. This results in the following set:
$$
\{x,y\in[0,1]^2:\min(h_1,h_2,h_7,\max(h_3,h_4,h_5))= 0\}.
$$
For the inductive step, let's consider an arbitrary recursion depth $k$. Suppose that the decision boundaries $R_k$ consist of points $x,y\in[0,1] $ such that
\begin{equation}
\label{eq:decision_manifold_repr_B}
\min\left(h_1,h_2,h_{r(k)}, D_1, \ldots, D_{\lfloor r(k)/4\rfloor+1}\right)=0,
\end{equation}
where $D_l$ is as follows
\begin{equation}
D_l:=\max(h_{4l-1}, h_{4l}, h_{4l+1})
\end{equation}
 describes dents.
Observing the pattern established by Eq.~\ref{eq:AA}, it becomes clear that incrementing the recursion depth from $k$ to $k+1$ doubles the number of  nested maximum functions. In other words, twice the previous number of ``dents'' emerge in our structure. This observation aligns with Eq.~\ref{eq:decision_manifold_repr_B}. This and the construction of half-spaces $h_i$   assure that all the half-spaces agree with the decision manifold $R_{k+1}$.
\end{proof}

\section{Min as a ReLU Net}
\label{app:min_as_relu}

\begin{theorem}
\label{thm:matrix_repr_of_relu}
The minimum function $\min:\mathbb{R}^d\to\mathbb{R}$ can be expressed as a ReLU neural network with weights $\{0,\pm1\}$.
\end{theorem}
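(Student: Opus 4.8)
The plan is to reduce the $d$-ary minimum to repeated applications of the binary minimum, and to realize the binary minimum exactly with weights drawn from $\{0,\pm1\}$. The starting point is the identity
$$
\min(a,b) = a - \sigma(a-b),
$$
which one checks by cases: if $a\ge b$ then $\sigma(a-b)=a-b$ and the right-hand side equals $b$, while if $a<b$ then $\sigma(a-b)=0$ and it equals $a$. The essential point is that this representation avoids the more familiar formula $\min(a,b)=\tfrac12(a+b-|a-b|)$, whose factor $\tfrac12$ would violate the $\{0,\pm1\}$ constraint; here every coefficient is $+1$ or $-1$.

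First I would write the binary minimum as a genuine one-hidden-layer ReLU network. Since a ReLU layer cannot transmit the raw value $a$ unchanged, I route it through the layer using the identity $t = \sigma(t)-\sigma(-t)$. Concretely, from input $(a,b)$ the hidden layer computes the three activations $\sigma(a),\sigma(-a),\sigma(a-b)$ via the weight matrix $\left(\begin{smallmatrix}1&0\\-1&0\\1&-1\end{smallmatrix}\right)$, and the output layer returns $\sigma(a)-\sigma(-a)-\sigma(a-b)$ with weight row $(1,-1,-1)$. All entries lie in $\{0,\pm1\}$, so $\min(\cdot,\cdot)\colon\mathbb{R}^2\to\mathbb{R}$ is exhibited as the desired network.

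Then I would handle general $d$ by the associativity $\min(x_1,\dots,x_d)=\min\big(\min(x_1,\dots),\dots\big)$, arranging the pairwise reductions into a balanced binary tree of depth $\lceil\log_2 d\rceil$. At each level the surviving coordinates are paired and passed through a copy of the binary-min gadget above; any coordinate not yet paired (when the current count is odd) is carried to the next level through the same $t=\sigma(t)-\sigma(-t)$ trick, so that the two branches stay synchronized in depth. Composing layers and applying the identity trick only ever introduces weights in $\{0,\pm1\}$, and since the class of such networks is closed under composition, the full tree is again a $\{0,\pm1\}$-weighted ReLU network computing $\min$ on $\mathbb{R}^d$.

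The main obstacle is bookkeeping rather than arithmetic: one must verify that routing intermediate and ``waiting'' values through successive ReLU layers never forces a coefficient outside $\{0,\pm1\}$ and never desynchronizes the tree, and --- should an explicit depth bound be wanted --- that the construction uses only $O(\log d)$ layers, consistent with the $\lceil\log_2 z(k)\rceil$ factor appearing in Lemma~\ref{lemma:neurons}. The nontrivial modeling choice is simply to use $\min(a,b)=a-\sigma(a-b)$ in place of the absolute-value formula, after which everything is exact and integer-weighted.
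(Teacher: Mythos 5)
Your proposal is correct and matches the paper's proof in all essentials: the same three-neuron gadget $\min(a,b)=\sigma(a)-\sigma(-a)-\sigma(a-b)$ with weights $(1,-1,-1)$ and $\left(\begin{smallmatrix}1&0\\-1&0\\1&-1\end{smallmatrix}\right)$ (the paper just routes the other argument through the $\sigma(t)-\sigma(-t)$ identity), followed by the same balanced pairwise reduction with identity-passthrough for an unpaired coordinate. The paper merely makes your ``bookkeeping'' explicit via block-diagonal matrices $\mathbf{S}'$, $\mathbf{A}'$ and their odd-case variants.
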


\begin{proof}
We first show that in the base cases of even and odd number of variables, $d=2$ and $d=3$  respectively, we can recover the minimum element by a hand-designed  ReLU neural architecture. Recall that $\sigma(x):=\max(x,0)$, applied element-wise.
For $d=2$ and elements $x_1,x_2\in\mathbb{R}$ it holds that 
\begin{align}
    \min(x_1,x_2)&=x_2 + \min(x_1-x_2,0) \nonumber \\
    &=x_2-\max(x_2-x_1,0) \nonumber \\ 
    &=\sigma(x_2)-\sigma(-x_2)-\sigma(-x_1+x_2), \label{eq:derivation_repr_B}
\end{align}
which can be represented as a  ReLU net
\begin{equation*}
\min(x_1,x_2)=\underbrace{\begin{pmatrix}
1 & -1 & -1 
\end{pmatrix}}_{\mathbf{S}}\sigma
\underbrace{\begin{pmatrix}
0 & 1 \\
0 & -1\\
-1 & 1
\end{pmatrix}}_{\mathbf{A}}\begin{pmatrix}
x_1 \\
x_2
\end{pmatrix}.
\end{equation*}
For $d=3$, $\min(x_1,x_2,x_3)=\min(\min(x_1,x_2),x_3)$, which can be expressed by a ReLU neural network as follows:
\begin{equation*}
    \mathbf{S}\sigma\mathbf{A}\begin{pmatrix}
        1 & -1 & -1 & 0 & 0\\
        0&0&0&1&-1
    \end{pmatrix}
    \sigma
    \begin{pmatrix}
    0 & 1 & 0 \\
    0 & -1 & 0 \\
    -1 & 1 & 0 \\
    0 & 0 & 1 \\
    0 & 0 & -1 
    \end{pmatrix}
    \begin{pmatrix}
    x_1\\
    x_2\\
    x_3
    \end{pmatrix},
\end{equation*}
where we first recover the $\min(x_1,x_2)$ and leave $x_3$ unchanged, resulting in  $(\min(x_1,x_2), x_3)$, and then we recover the minimum element using the base case for $d=2$.

\textit{Inductive step.} We start with the inductive step for an even number of elements. 
Let $\mathbf{x}\in\mathbb{R}^{d},\ d=2l$ for $l\in\mathbb{N}$, and suppose that with $\mathbf{S}^\prime\sigma \left( \mathbf{A}^\prime \mathbf{x}\right),\mathbf{S}^\prime\in\{0,\pm1\}^{d\times 3d}, \mathbf{A}^\prime\in\{0,\pm1\}^{3d\times 2d}$  
\begin{equation*}
\mathbf{S}^\prime:=
\begin{pmatrix}
\mathbf{S} & \mathbf{0}_{1\times3} & \ldots &  \mathbf{0}_{1\times3}\\
\mathbf{0}_{1\times3} & \mathbf{S} & \mathbf{0}_{1\times3} & \ldots\\
\ldots & \ldots & \ldots & \ldots\\
\mathbf{0}_{1\times3} & \ldots & \mathbf{0}_{1\times3} & \mathbf{S}
\end{pmatrix},\end{equation*}
and 
\begin{equation*}
\mathbf{A}^\prime:=
\begin{pmatrix}
\mathbf{A} & \mathbf{0}_{3\times2} & \ldots &  \mathbf{0}_{3\times2}\\
\mathbf{0}_{3\times2} & \mathbf{A} & \mathbf{0}_{3\times2} & \ldots\\
\ldots & \ldots & \ldots & \ldots\\
\mathbf{0}_{3\times2} & \ldots & \mathbf{0}_{3\times2} & \mathbf{A}
\end{pmatrix},
\end{equation*}
we group elements in pairs, reducing the problem to $\min(\min(x_1,x_2),\ldots,\min(x_{d-1},x_{d}))$ ($\mathbf{0}_{m\times n}$ is a matrix of zeros with $m$ rows and $n$ columns).
Then, for  $\mathbf{x}\in\mathbb{R}^{d+2}$, we use the inductive step to recover $\min(\min(x_1,x_2),\ldots,\min(x_{d-1},x_{d}), \min(x_{d+1},x_{d+2}))$ by using $\mathbf{S}^{\prime\prime}\sigma  (\mathbf{A}^{\prime\prime} \mathbf{x})$ where
\begin{equation}
\label{eq:induction_step}
    \mathbf{S}^{\prime\prime}:=\begin{pmatrix}
    \mathbf{S}^\prime & \mathbf{0}_{d\times 3} \\
    \mathbf{0}_{1\times 3d} & \mathbf{S}
    \end{pmatrix}, \ 
    \mathbf{A}^{\prime\prime}:=\begin{pmatrix}
    \mathbf{A}^\prime & \mathbf{0}_{3d\times 2} \\
    \mathbf{0}_{3\times 2d} & \mathbf{A} 
    \end{pmatrix}.
\end{equation}
Indeed, if with $\mathbf{S}^\prime$ and $\mathbf{A}^\prime$ we group variables into pairs, then we can also group in pair two additional elements. 
Now let's consider the inductive step for an odd case $\mathbf{x}\in\mathbb{R}^{d+1}$.   We can recover $\min(\min(x_1,x_2),\ldots,\min(x_{d-1},x_{d}), x_{d+1})$ by using $\mathbf{S}^{\bullet}\sigma(\mathbf{A}^{\bullet}\mathbf{x})$, where 
\begin{equation*}
    \mathbf{S}^{\bullet}:=\begin{pmatrix}
    \mathbf{S}^\prime & \mathbf{0}_{d\times 1} & \mathbf{0}_{d\times 1} \\
    \mathbf{0}_{2\times 3d} & 1 & -1
    \end{pmatrix}, \ 
    \mathbf{A}^{\bullet}:=\begin{pmatrix}
    \mathbf{A}^\prime & \mathbf{0}_{3d\times 1} \\
    \mathbf{0}_{1\times 2d} & 1 \\
    \mathbf{0}_{1\times 2d} & -1
    \end{pmatrix}.
\end{equation*}
For $\mathbf{x}\in\mathbb{R}^{d+3}$, we extend $\mathbf{S}^{\bullet}$ and $\mathbf{A}^{\bullet}$ as in Eq.~\eqref{eq:induction_step}, pairing the last two elements. 
Recursively applying $\mathbf{S}^*\sigma (\mathbf{A}^* \mathbf{x})$ (where $^*$ means that the dimensionality must be chosen appropriately) groups the elements in pairs and eventually returns the minimum element. 
\end{proof}



\end{document}